\theoremstyle{plain}
\newtheorem{theorem}{Theorem}[section]
\newtheorem{proposition}[theorem]{Proposition}
\newtheorem{fact}[theorem]{Fact}
\theoremstyle{definition}
\theoremstyle{remark}
\newcommand{\out}[1]{}
\newcommand{\nin}{n_\text{in}}
\newcommand{\nout}{n_\text{out}}
\newcommand{\nlr}{n_\text{lr}}
\newcommand{\W}{\mathbf{W}}
\newcommand{\B}{\mathbf{B}}
\newcommand{\x}{\mathbf{x}}
\def\eqref#1{equation~\ref{#1}}
\def\1{\bm{1}}
\DeclareMathAlphabet{\mathsfit}{\encodingdefault}{\sfdefault}{m}{sl}
\SetMathAlphabet{\mathsfit}{bold}{\encodingdefault}{\sfdefault}{bx}{n}
\title{Functional-level Uncertainty Quantification for Calibrated Fine-tuning on LLMs}
\author{ {\hspace{0.1mm}Ruijia Niu}\textsuperscript{ 1 2} \\
	\And{\hspace{0.1mm}Dongxia Wu}\textsuperscript{ 1 2}\\
    \And{\hspace{0.1mm}Rose Yu}\textsuperscript{ 1 2}\\
    \And{\hspace{0.1mm}Yi-An Ma}\textsuperscript{ 2 1}\\
}
\date{}
\newcommand{\blfootnote}[1]{\begingroup%
\renewcommand\thefootnote{}\footnotetext{#1}%
\addtocounter{footnote}{-1}%
\endgroup}
\begin{document}
\maketitle

\setcounter{footnote}{1}
\footnotetext{Department of Computer Science and Engineering, University of California San Diego, La Jolla, California, USA}
\setcounter{footnote}{2}
\footnotetext{Halıcıoğlu Data Science Institute, University of California San Diego, La Jolla, California, USA}
\blfootnote{Preprint. Correspondence to: Yian Ma <yianma@ucsd.edu>.}

\begin{abstract}
Accurate uncertainty quantification in large language models (LLMs) is essential for reliable confidence estimation, yet fine-tuned LLMs often become overconfident under limited adaptation data. Existing uncertainty methods for PEFT-based LLMs are largely post hoc, estimating uncertainty after fine-tuning rather than improving how adapters specialize to task-specific input-output relationships. We propose Functional-Level Uncertainty Quantification for Calibrated Fine-Tuning (UQ4CT), which calibrates uncertainty over the functional space induced by prompt-dependent mixtures of LoRA experts. UQ4CT implements this perspective through a mixture-of-experts fine-tuning framework, where a calibration loss aligns functional-level confidence with predictive correctness during training. Across four multiple-choice benchmarks and two open-ended generative QA tasks, UQ4CT reduces Expected Calibration Error (ECE) by over $25\%$ while preserving high accuracy. Under distribution shift, UQ4CT maintains superior calibration and competitive accuracy, demonstrating improved reliability and generalization for fine-tuned LLMs.
\end{abstract}

\section{Introduction}
Quantifying the credibility of outputs has been one of the most important problems around large language models (LLMs)\citep{chang2024survey}. In particular, fine-tuned LLMs often struggle with overconfidence in their outputs due to limited training data, failing to reflect the true credibility of their answers\citep{xiao2022uncertainty,he2023preserving,tian2023just,openai2023gpt4}. Such overconfidence can assert misinformation with high certainty, making it difficult for users to discern truth from falsehood. This has become a crucial problem in safety-critical decision making and scientific domains where data is relatively limited, such as formal proof generation, climate science, and healthcare~\citep{singhalLargeLanguageModels2022,wuBloombergGPTLargeLanguage2023,lampinenPassiveLearningActive2023,li2022pre}. Methods that enhance uncertainty quantification for fine-tuned LLMs are therefore essential to ensure trustworthy predictions.

A key challenge in LLM uncertainty quantification is balancing accuracy, calibration, and efficiency: uncertainty estimates should be calibrated without degrading predictive performance or slowing generation. Existing methods often rely on prompt perturbations to measure prediction variance~\citep{hou2023decomposing,gao2024spuq} or multiple sampled completions to estimate disagreement~\citep{farquhar2024detecting}. However, these approaches largely assume the model is already aligned with the target data distribution, limiting their ability to capture fine-tuning-induced uncertainty and generalization gaps under limited adaptation data. Their reliance on multiple forward passes also introduces substantial computational overhead, reducing scalability.
Beyond prompt-level approaches, Bayesian methods and ensemble-based uncertainty quantification have been established for fine-tuned LLMs, often in conjunction with low-rank adaptation(LoRA) ~\citep{hu2021lora}. Methods such as Monte Carlo dropout \citep{gal2016dropout}, checkpoint ensembles \citep{chen2017checkpoint}, deep ensembles \citep{lakshminarayanan2017simple,wang2023lora,zhai2023uncertainty}, and Laplace-LoRA \citep{yang2024bayesian} apply Bayesian inference or ensembling over LoRA parameters to capture uncertainty arising from model adaptation and limited data.

While Bayesian and ensemble methods estimate uncertainty after fine-tuning by analyzing the learned parameter space, they do not address the limitations caused by sparse data during fine-tuning. This post hoc perspective can miss uncertainty that arises when adapting to new tasks with limited data. 

\begin{wrapfigure}{r}{0.5\textwidth}
  \centering
  \includegraphics[width=0.5\textwidth]{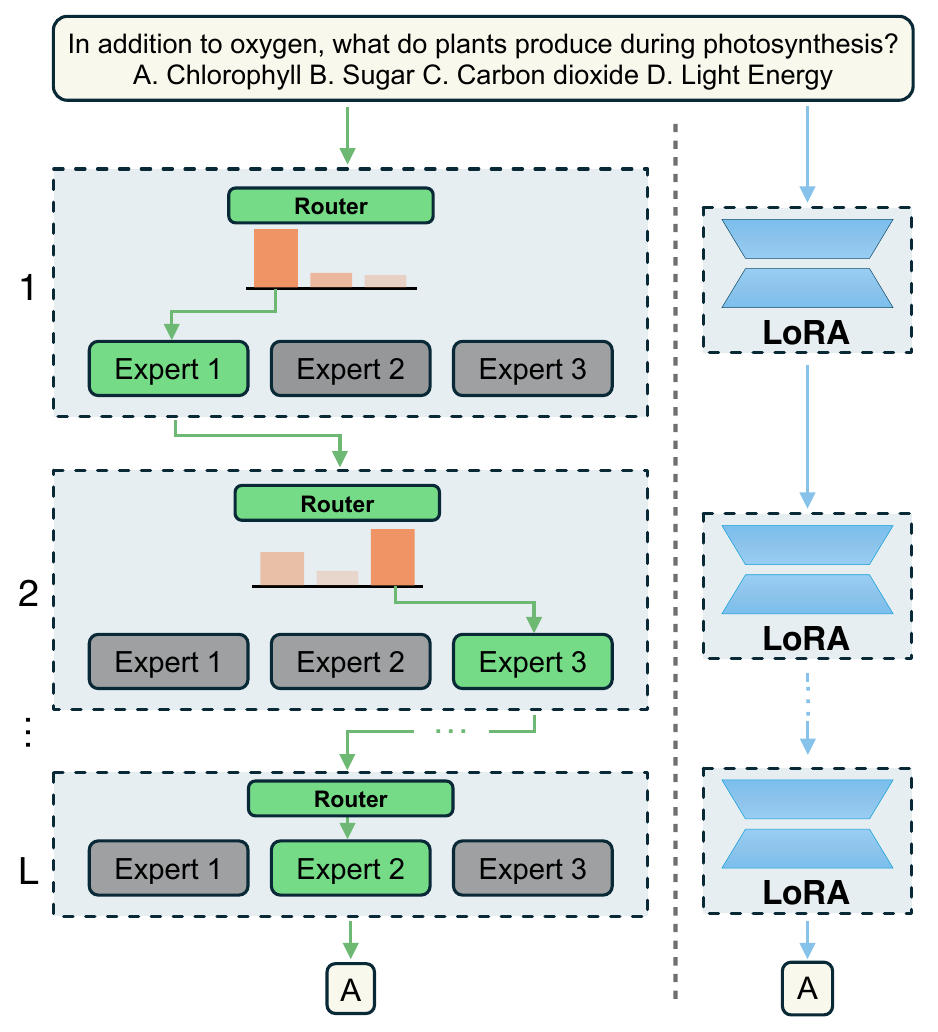}
  \caption{Left: The MoE architecture captures diverse functional relationships by dynamically routing inputs to different expert modules based on the prompt. Right: The standard LoRA approach lacks such functional-space diversity, limiting its capacity to capture variations in the functional relationships during fine-tuning.}
  \label{fig:MixLoRA}
\end{wrapfigure}

To overcome this, we shift focus from parameter-space to functional-space uncertainty quantification. The functional space encompasses the input-output mappings the model can realize, capturing the true variability in its predictions. By calibrating uncertainty at this level during fine-tuning, we ensure the model’s confidence better reflects its actual predictive reliability.

We therefore introduce Functional-Level Uncertainty Quantification for Calibrated Fine-Tuning (UQ4CT), a method that explicitly calibrates functional-level uncertainty in LLMs during fine-tuning. As demonstrated in \Cref{fig:MixLoRA}, UQ4CT leverages ensembles of LoRA modules at each layer to construct a rich set of basis functions. We then employ a Mixture-of-Experts (MoE) architecture~\citep{li2024mixlora} to hierarchically combine these basis functions, forming a flexible functional space. During fine-tuning, UQ4CT jointly learns the LoRA expert parameters and calibrates the prompt-dependent function mixture to align functional-level uncertainty with predictive correctness, enabling the model to output calibrated distributions over the functional space.

During inference, LoRA experts offer diverse functional relationships acquired during fine-tuning, while MoE routers dynamically select the most relevant experts for each input prompt. This selection is guided by functional-level uncertainty calibration performed throughout fine-tuning, which aims to optimize the choice of the correct functional relationship for each prompt. More accurate expert selection enables the model to learn diverse functional relationships. As a result, the model’s uncertainty estimates become better aligned, enhancing calibration without compromising accuracy.
To summarize, our contributions include:
\begin{itemize}
    \item A novel uncertainty quantification approach for LLMs with MoE architecture during fine-tuning to quantify functional-level uncertainty and align with the probability of predictive correctness, which mitigates overconfidence issue and improves generalizability.
    \item A new calibration loss incorporating predictive correctness to dynamically align the prompt-dependent LoRA mixture for better uncertainty estimation.
    \item Hierarchical decomposition of functional-level uncertainty into layer-wise mixture weights with guarantee that our calibration loss aligns mixture weights with predictive correctness.
    \item Empirically, UQ4CT achieves over $25\%$ ECE reduction on $3$ common-sense reasoning tasks and $1$ domain-specific question answering task while preserving high accuracy. Under distribution shift, UQ4CT maintains superior calibration on $2$ common-sense reasoning tasks and $4$ domain-specific tasks. Experiments on open-ended generative QA further suggest that the functional-space calibration perspective extends beyond classification-style settings.
\end{itemize}
\section{Preliminaries}
\paragraph{Low-rank Adaptation (LoRA).}
LLMs have numerous large weight matrices to perform matrix multiplication, denoted as $\W_0 \in \mathbb{R}^{\nout\times\nin}$ that maps inputs $\mathbf{x}$ to outputs $\mathbf{h}$. \cite{hu2021lora} proposes LoRA, which fixes $\W_0$ and introduces a low-rank perturbation $\Delta \W$ to the weight matrix:
\begin{align}  \label{eq:lora}
  \mathbf{h} &= \W_0 \x + \Delta \W \x = \W_0 \x + \B \mathbf{A} \x.
\end{align}

\vspace{-1mm}

Here, $\Delta \W$ is calculated as the product of two matrices, $\B \in \mathbb{R}^{\nout\times\nlr}$ and $\mathbf{A}\in\mathbb{R}^{\nlr\times \nin}$ where $\nlr$ is significantly smaller than $\nin$ or $\nout$. For example, we use $\nlr = 32$ while $\nin = \nout = 4096$ for the Llama3.1-8B model \citep{grattafiori2024llama}. Therefore, the total number of LoRA parameters for this $\Delta \W$ is $\nlr (\nin + \nout)$, which is far smaller than the parameter count of the full matrix, $\nin\nout$. One of the key motivations of incorporating LoRA to fine-tune LLMs is the vast amount of memory cost reduction compared with fine-tuning on the full model. For an LLM with $7$ billion parameters, maintaining the average gradient
and average squared gradients for optimization multiplies the memory required by a factor of $3$ compared to simply loading model weights. LoRA greatly mitigates this memory cost as the tripled memory consumption only applies to LoRA adapters.

\paragraph{Mixture of Experts (MoE).}

LoRA Mixture-of-Experts \citep{li2024mixlora, wu2024mixture} is an efficient approach to scale the number of parameters while maintaining the same computational bounds. LoRA MoE utilizes the top-k router to assign each token to the LoRA experts \citep{lepikhingshard}. The router is a linear layer that maps the input hidden state $\mathbf{h}$ to a probability distribution of candidate experts. 

The plain transformer block in a large language model consists of the $q,k,v$ encoding layers ($\text{FFN}_{q,k,v}$),
layer norm (LN) and the feedforward layer (FFN), together with residual connections. Formally, given $\mathbf{h}^1$ as the tokenized input text, the output of $\ell$-th layer is generated as: 

\begin{equation}
    \mathbf{z}^\ell = f_{attn}(\text{FFN}_{q,k,v}(\text{LN}(\mathbf{h}^{\ell-1}))) + \mathbf{h}^{\ell-1}, \quad
    \mathbf{h}^\ell = \text{FFN}(\text{LN}(\mathbf{z}^\ell)) + \mathbf{z}^\ell.
\end{equation}

Here, $f_{attn}$ represents the attention calculation step.

Let $\mathbf{h}^\ell \in \mathbb{R}^{1\times d}$ $(1 \leq \ell \leq L)$ denote the output hidden state at the $\ell$-th layer of the LLM, where $L$ is the number of LLM layers and $d$ is the hidden dimension. With $\mathbf{W}_r^\ell$ as the trainable router weight at layer $\ell$, the top-k gate router $\Tilde{R}(\cdot)$ chooses $k$ experts with highest probability given a hidden state $\mathbf{h}^\ell$:

\begin{equation} \label{eq:moe_router}
    \Tilde{R}^\ell(\mathbf{h}^\ell) = \text{Keep-Top-k}(\text{Softmax}(\mathbf{W}^\ell_r \cdot \mathbf{h}^\ell)).
\end{equation}

Finally, we obtain the final MixLoRA prediction with:
\begin{equation} \label{eq:moe_out}
    \text{MixLoRA}(\mathbf{h}^\ell) = \sum_{k=1}^K \Tilde{R}^\ell(\mathbf{h}^\ell)_k E^\ell_k(\mathbf{h}^\ell), \quad
    E^\ell_k(\mathbf{h}^\ell) = \mathbf{W}_{pre}^\ell \cdot \mathbf{h}^\ell + \mathbf{B}^\ell_k \mathbf{A}^\ell_k \cdot \mathbf{h}^\ell.
\end{equation}
where $\mathbf{W}^\ell_{pre}$ is the frozen pretrained weight at layer $\ell$ and $\mathbf{B}^\ell_k \mathbf{A}^\ell_k$ is the k-th LoRA expert.

With MixLoRA defined in Equation \ref{eq:moe_out}, we can apply MixLoRA layers at $q,k,v$ encoding and FFN layers:
\begin{equation}
    \mathbf{z}^\ell = f_{attn}(\text{MixLoRA}_{q,k,v}(\text{LN}(\mathbf{h}^{\ell-1}))) + \mathbf{h}^{\ell-1}, \quad
    \mathbf{h}^\ell = \text{MixLoRA}(\text{LN}(\mathbf{z}^\ell)) + \mathbf{z}^\ell.
\end{equation}

\section{Methodology}
\label{sec:method}
The high-level goal of UQ4CT is to leverage the ensemble of prompt-dependent LoRA mixture-of-experts (MoE) to guide and calibrate the confidence of the model during fine-tuning. By quantifying the variability in how different LoRA experts are combined for each input, UQ4CT enables the model to adaptively select expert mixtures that reflect the true uncertainty in its predictions. Our approach not only encourages the model to exploit confident expert combinations for accurate predictions but also promotes exploration of alternative experts when uncertainty is high, ultimately leading to better-calibrated and more reliable model outputs.

We note that the MixLoRA MoE structure used in UQ4CT is not part of the LLM architecture itself; rather, it is a lightweight, modular fine-tuning adapter that operates entirely at the LoRA adapter level. UQ4CT only requires LoRA-compatible linear layers, which are standard in modern decoder-based LLMs. With the base model remains unchanged, it is as easy to apply as standard LoRA.
\begin{figure*}[t!]
  \centering
  \includegraphics[width=1\linewidth,trim={0, 0, 0, 0}]{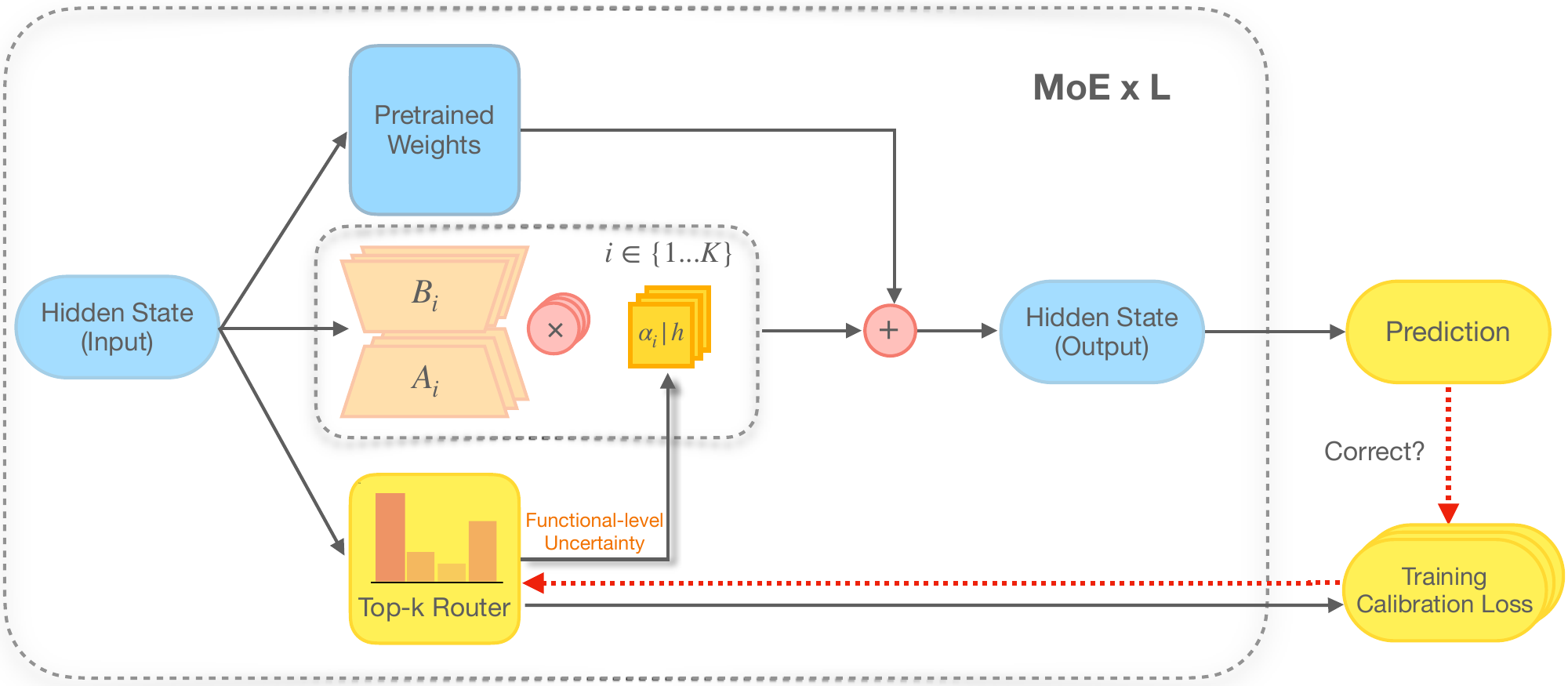}
  \caption{MoE architecture to capture functional-level uncertainty. LoRA experts ($B_i$, $A_i$) capture diverse functional bases, while the top-$k$ router assigns mixture weights based on the input hidden state. The calibration loss aligns functional-level uncertainty with prediction correctness, encouraging confident expert selection for correct predictions and higher uncertainty for incorrect ones.}
    \label{fig:Struct}
\end{figure*}

\subsection{Decomposition of the Functional Space}
\label{sec:decomp}
Given the immense size of both the pre-training dataset and the model, we posit that the pretrained network contains submodules capable of expressing a wide range of functional relationships present in the data. During fine-tuning, our focus is on the functional space spanned by the model, which can be effectively captured as a mixture of LoRA experts, each representing a distinct basis function. However, naively decomposing the full functional space by considering all possible combinations of these experts quickly leads to a combinatorial explosion.

\textbf{Naive Decomposition.}
Denote the input prompt as $\mathbf{x}$ and the functional map from input to output as $f$. A straightforward approach is to express the function as a sum over all possible compositions of $K$ submodules per layer, across $L$ layers:
\begin{equation}
f(\mathbf{x}) = \sum_{k^1,\cdots, k^L=1}^{K} \alpha_{k^1,\dots,k^L}g_{k^L}^L\bigg( \cdots \bigg( g_{k^{l}}^{l} \big(\dots g_{k^1}^1(\mathbf{x}) \big) \bigg) \bigg),
\label{eq:naive_decomp}
\end{equation}
where each $g_{k^\ell}^\ell$ denotes a particular variant (e.g., LoRA-adapted) of the $\ell$-th block and
\begin{align}
g_{k^\ell}^\ell (\mathbf{h}^\ell) = \mathbf{h}^\ell + E_{k^\ell}^\ell \big( f_{trans}^\ell (\mathbf{h}^\ell) \big),
\end{align}
here $f_{trans}^\ell$ represents the necessary non-fine-tuned operations within a transformer block (i.e. layer norm, attention calculation, etc.) and as defined in Eq.~\ref{eq:moe_out}, $E_{k^\ell}^\ell$ denotes the parameterized adaptation associated with the $k^\ell$-th expert in the $\ell$-th layer.

However, this naive decomposition is intractable in practice, as it requires keeping track of $K^L$ mixture weights $\alpha_{k^1,\ldots,k^L}$. This is an exponential growth in the number of parameters with respect to both the number of layers $L$ and the number of submodules per layer $K$, which makes the direct approach computationally infeasible for realistic network sizes.

\textbf{Hierarchical Decomposition.}
To address the combinatorial explosion of mixture weights, we instead propose a hierarchical decomposition. Here, the mixture at each layer is formed independently, and the output of each layer is a weighted sum over its submodules, with the weights themselves being layer-specific:
\begin{equation}
f(\mathbf{x}) = \sum_{k^L=1}^{K} \alpha^L_{k^L} g_{k^L}^L \bigg( \cdots \Bigg( \sum_{k^{l}=1}^{K} \alpha^{l}_{k^{l}} g_{k^{l}}^{l}  \bigg( \dots \sum_{k^1=1}^{K} \alpha^1_{k^1} g_{k^1}^1(\mathbf{x}) \bigg) \Bigg)\Bigg).
\label{eq:hierarch_decomp}
\end{equation}
Instead of needing $K^L$ mixture weights, this hierarchical structure only requires $K \cdot L$ weights $\alpha^\ell_{k^\ell}$, one for each submodule in each layer. This dramatically reduces the parameters required and makes the decomposition tractable, while still enabling a rich set of compositional functions.

\textbf{Dynamic, Input-Dependent Routing.}
To further enhance expressivity and efficiency, we allow the mixture weights to depend dynamically on the input at each layer. Specifically, we set the mixture weights to be a sparse routing function $R^\ell$ of the hidden state $\mathbf{h}^\ell$ at each layer, where
$\alpha_{k^\ell}^{\ell} = \alpha_{k^\ell}^{\ell}(\mathbf{h}^\ell) = R_{k^\ell}^\ell(f_{trans}^\ell (\mathbf{h}^\ell)).$

Substituting this definition into the hierarchical mixture, the overall function $f(\mathbf{x})$ becomes:
\begin{equation} \label{eq:sub_alpha}
f(\mathbf{x}) = \sum_{k^L=1}^{K} R_{k^L}^L(f_{trans}^L (\mathbf{h}^L)) g_{k^L}^L\Bigg( \dots \Bigg(\sum_{k^1=1}^{K} R_{k^1}^1(f_{trans}^1 (\mathbf{h}^1)) g_{k^1}^1(\mathbf{x}) \Bigg) \Bigg).
\end{equation}

This recursive formulation shows that at each layer, the submodules are weighted according to the input-dependent routing function, which adapts based on the current hidden state.

For a layer-wise perspective, the computation at each layer $\ell$ can be explicitly written as:
\vspace{-1mm}
\begin{align}
\mathbf{h}^{\ell+1} = \sum_{k^{\ell}=1}^{K} 
R_{k^\ell}^\ell(f_{trans}^\ell (\mathbf{h}^\ell)) g^{\ell}_{k^\ell}(\mathbf{h}^\ell).
\label{eq:hierachy_decomp}
\end{align}
\vspace{-1mm}
Here, $R^\ell$ selects the most relevant submodules for a given input, allowing the network to adaptively compose its computation path at each layer in a sparse fashion.

\subsection{Quantifying FLC with MixLoRA}
In the previous section, we have established a parsimonious representation of the functional space.
In this section, we choose a simple function to encode the functional level uncertainty and provide the intuition as follows. We first note that given a fixed MoE model architecture, the larger weight a mixture component has, the more certain we are about that component. 

The model uncertainty of the MixLoRA architecture is quantified by considering perturbations $\Delta f(\mathbf{x})$ to the model $f(\mathbf{x})$. Following the discussion and notation in Sec.~\ref{sec:decomp}, we can show  that these perturbations are instantiated in the space of the mixture weights $\alpha$ (as defined in Eq.~\ref{eq:hierarch_decomp} and \ref{eq:sub_alpha}):

\begin{fact}[Model Perturbation Structure]
\label{fact:model}
Under regularity assumptions on the residual connection architecture, perturbations $\Delta f(x)$ to the model $f(x)$ approximately decompose as:
\[
\Delta f(x) \approx \sum_{\ell=1}^{L} \sum_{k^\ell=1}^K \Delta \alpha_{k^\ell}^\ell(\mathbf{h}^\ell) \cdot g_{k^\ell}^\ell(\mathbf{h}^\ell).
\]
\end{fact}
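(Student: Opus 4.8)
The plan is to prove Fact~\ref{fact:model} by a first-order (linearization) argument applied layer-by-layer through the hierarchical decomposition of Eq.~\ref{eq:hierachy_decomp}, exploiting the residual connections to control how perturbations propagate. First I would set up notation: write $\mathbf{h}^{\ell+1} = \Phi^\ell(\mathbf{h}^\ell; \alpha^\ell)$ where $\Phi^\ell(\mathbf{h}^\ell;\alpha^\ell) = \sum_{k^\ell=1}^{K} \alpha_{k^\ell}^\ell(\mathbf{h}^\ell)\, g^{\ell}_{k^\ell}(\mathbf{h}^\ell)$ and recall from Sec.~\ref{sec:decomp} that $g^{\ell}_{k^\ell}(\mathbf{h}^\ell) = \mathbf{h}^\ell + E^\ell_{k^\ell}(f^\ell_{trans}(\mathbf{h}^\ell))$ carries an identity (residual) term. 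The source of perturbations is the mixture/routing weights: a perturbation $\Delta\alpha^\ell$ at each layer induces a perturbation $\Delta\mathbf{h}^\ell$ in the hidden states, and hence $\Delta f(\mathbf{x}) = \Delta\mathbf{h}^{L+1}$. The goal is to show that, to first order, these accumulated contributions sum to $\sum_{\ell}\sum_{k^\ell}\Delta\alpha_{k^\ell}^\ell(\mathbf{h}^\ell)\,g_{k^\ell}^\ell(\mathbf{h}^\ell)$ with the Jacobian factors that transport layer-$\ell$ perturbations to the output being approximately the identity.

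The key steps, in order. Step 1: compute the total differential of the recursion, $\Delta\mathbf{h}^{\ell+1} = \big(\partial_{\mathbf{h}}\Phi^\ell\big)\Delta\mathbf{h}^\ell + \big(\partial_{\alpha}\Phi^\ell\big)\Delta\alpha^\ell$, where the second term is exactly $\sum_{k^\ell}\Delta\alpha_{k^\ell}^\ell(\mathbf{h}^\ell)\,g_{k^\ell}^\ell(\mathbf{h}^\ell)$ (treating the explicit $\alpha$-dependence; note the routing function's implicit dependence on $\mathbf{h}^\ell$ is folded into $\Delta\alpha_{k^\ell}^\ell(\mathbf{h}^\ell)$ as stated in the Fact). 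Step 2: unroll the recursion from $\ell=1$ to $L$ to get $\Delta f(\mathbf{x}) = \Delta\mathbf{h}^{L+1} = \sum_{\ell=1}^{L} \Big(\prod_{m=\ell+1}^{L}\partial_{\mathbf{h}}\Phi^m\Big)\Big(\sum_{k^\ell}\Delta\alpha_{k^\ell}^\ell(\mathbf{h}^\ell)\,g_{k^\ell}^\ell(\mathbf{h}^\ell)\Big)$, discarding second-order cross terms $\Delta\alpha\,\Delta\mathbf{h}$. Step 3: invoke the regularity assumptions on the residual architecture to argue each Jacobian $\partial_{\mathbf{h}}\Phi^m = \sum_{k^m}\alpha^m_{k^m}\big(I + \partial_{\mathbf{h}}(E^m_{k^m}\circ f^m_{trans})\big) + (\text{terms from }\partial_{\mathbf{h}}\alpha^m)$ is approximately $I$: the mixture weights sum to one (softmax/top-$k$ normalization), the residual identity dominates, and the adapter Jacobians are small (LoRA is a low-rank, small-norm perturbation), so the transport operator $\prod_{m=\ell+1}^{L}\partial_{\mathbf{h}}\Phi^m \approx I$. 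Substituting gives the claimed formula.

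The main obstacle I expect is Step 3 --- making precise the sense in which the product of layer Jacobians is approximately the identity, and equivalently what ``regularity assumptions on the residual connection architecture'' must be assumed. This requires (i) that $\sum_{k^m}\alpha^m_{k^m} = 1$ so the identity terms combine coherently rather than compounding by a factor of $K$, (ii) a bound $\|\partial_{\mathbf{h}}(E^m_{k^m}\circ f^m_{trans})\| \le \varepsilon$ uniformly, so a product of $L$ such factors stays within $O(L\varepsilon)$ of the identity by a standard telescoping/$\log$ estimate, and (iii) that the router Jacobian $\partial_{\mathbf{h}}\alpha^m$ contributes only higher-order terms, which holds because its contribution to $\Delta f$ is already multiplied by $\Delta\mathbf{h}^m$ (second order) or, if kept, is absorbed into the definition of $\Delta\alpha^\ell(\mathbf{h}^\ell)$ as the Fact's statement permits. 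I would state these as the ``regularity assumptions'' explicitly and then the rest is bookkeeping; the ``$\approx$'' in the Fact is exactly the first-order truncation plus the $O(L\varepsilon)$ Jacobian-transport error. Since the paper presents this as a Fact rather than a Theorem, I would keep the argument at the level of a linearization sketch and not chase explicit constants.
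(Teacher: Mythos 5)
Your plan is correct and matches the paper's own argument: the appendix proof applies exactly this first-order product rule at each layer, uses the regularity assumption that the adapter term $E_{k^\ell}^\ell \circ f_{trans}^\ell$ has negligible Lipschitz constant so that $\Delta g_{k^\ell}^\ell(\mathbf{h}^\ell) \approx \Delta \mathbf{h}^\ell$, combines the identity contributions via $\sum_{k^\ell}\alpha_{k^\ell}^\ell = 1$ (used implicitly), and telescopes the recursion so the layer-to-output transport is the identity to first order. Your Jacobian-product phrasing of Step 3 is just a more explicit rendering of the same telescoping step, so no substantive difference.
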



In high-dimensional settings, the basis functions are approximately orthogonal. Thus, the perturbation $\Delta f(x)$ is entirely represented by the set $\left\{ \Delta \alpha_{k^\ell}^\ell(\mathbf{h}^\ell) \right\}_{k^\ell=1,\dots,K}^{\ell =1,\dots,L}$. 
Therefore, the functional-level confidence (FLC) can be generally modeled as a linear function over the mixture weights:
\[
\text{FLC}(x) = U\left( \left\{\alpha_{k^\ell}^\ell(\mathbf{h}^\ell) \right\}_{k^\ell=1,\dots,K}^{\ell =1,\dots,L}\right),
\]
where $U(\cdot)$ denotes a linear aggregation function. Details of derivation is presented in Appendix \ref{app:derivation}.

In practice, as illustrated in Figure~\ref{fig:Struct}, the top-$k$ router at each layer produces a sparse routing distribution that dynamically mixes the basis functions represented by the LoRA experts conditioned on the current hidden state. The retained top-$K$ routing weights directly reflect the concentration of expert selection and therefore provide a functional-level confidence signal. Following the MoE routing mechanism in Eq.~(\ref{eq:moe_router}) and Eq.~(\ref{eq:moe_out}), we use a top-$2$ router. For each layer $\ell$, we compute the router probabilities and retain the two largest entries:
\begin{equation}
    \tilde{R}^\ell(\mathbf{h}^\ell)=\mathrm{KeepTop2}\!\left(\mathrm{Softmax}(\mathbf{W}_r^\ell \mathbf{h}^\ell)\right),
    \quad
    \mathrm{FLC}(x)=\frac{1}{L}\sum_{\ell=1}^{L}\sum_{i=1}^{2}\tilde{R}_i^\ell(\mathbf{h}^\ell).
\end{equation}

Given an input prompt $x$, we aggregate the retained routing weights over the selected experts and across all $L$ layers to estimate the functional-level confidence (FLC) for the predicted next token, where $i$ indexes the two experts selected by the top-$2$ router at layer $\ell$.

\subsection{Calibration Loss}

The FLC model provides a principled way to calibrate the mixture parameters against predictive accuracy, enabling better alignment between output distributions and true model confidence. Specifically, for MoE top-$k$ routers, we design the following calibration loss for training:
\begin{equation} \label{eq:calibration}
    \mathcal{L}_{\mathrm{cal}} = \left(\mathbbm{1}{\{\text{MixLoRA}(x) = y^*\}} - \text{FLC}(x) \right)^2.
\end{equation}
The first term is an indicator function that equals $1$ if the model prediction matches the ground truth $y^*$ for prompt $x$, and $0$ otherwise, corresponding to a one-hot definition of ground truth confidence.

This calibration loss encourages functional-level confidence (FLC) to reflect predictive correctness. As shown in Figure~\ref{fig:Struct}, correct predictions push FLC toward $1$, while incorrect predictions push toward $0$. Optimized over the data distribution, this objective aligns FLC with the probability of prediction correctness.
We formally state this property as follows and present the proof in Appendix \ref{app:derivation}: 

\begin{proposition}[Truthfulness of Calibration Loss]\label{prop:calib}
Let the calibration risk be defined as the expectation of the calibration loss over the data distribution:
\begin{equation*}
\bar{\mathcal{L}}_{\mathrm{cal}} = \mathbb{E}_{(x,y^*)\sim\mathcal{D}}\left( \mathbbm{1}\{\mathrm{MixLoRA}(x) = y^*\} - \mathrm{FLC}(x) \right)^2.
\end{equation*}
If this calibration risk is optimized over the data distribution $\mathcal{D}$, then the optimal solution is $\mathrm{FLC}(x) = \mathbb{P}(\mathrm{MixLoRA}(x) = y^*(x))$; that is, the optimally trained FLC corresponds to the probability that the model's prediction is correct.
\end{proposition}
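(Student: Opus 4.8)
The plan is to recognize the calibration risk as a pointwise-minimizable squared-error objective and invoke the standard fact that the conditional mean is the unique minimizer. First I would condition on the input: write the calibration risk as $\bar{\mathcal{L}}_{\mathrm{cal}} = \mathbb{E}_{x}\left[ \mathbb{E}_{y^* \mid x}\left( \mathbbm{1}\{\mathrm{MixLoRA}(x) = y^*\} - \mathrm{FLU}(x) \right)^2 \right]$, using the tower property. Since $\mathrm{FLU}(x)$ depends only on $x$ (it is a deterministic function of the hidden states, which are themselves functions of $x$) and the outer expectation over $x$ is a nonnegative average, it suffices to minimize the inner conditional expectation separately for each fixed $x$. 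This reduces the problem to the classic one-dimensional regression fact: for a $\{0,1\}$-valued random variable $Z$ and a constant $c$, the map $c \mapsto \mathbb{E}[(Z-c)^2]$ is a strictly convex quadratic minimized uniquely at $c = \mathbb{E}[Z] = \mathbb{P}(Z=1)$.

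Next I would carry out that one-variable minimization explicitly for $Z = \mathbbm{1}\{\mathrm{MixLoRA}(x) = y^*\}$ with $c = \mathrm{FLU}(x)$: expand $\mathbb{E}[(Z-c)^2 \mid x] = \mathbb{E}[Z \mid x] - 2c\,\mathbb{E}[Z\mid x] + c^2 = p(x) - 2c\,p(x) + c^2$ where $p(x) := \mathbb{P}(\mathrm{MixLoRA}(x) = y^*(x) \mid x)$, using $Z^2 = Z$. Completing the square gives $(c - p(x))^2 + p(x)(1-p(x))$, whose minimum over $c$ is attained exactly at $c = p(x)$, with residual value equal to the irreducible Bayes term $p(x)(1-p(x))$. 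Since this holds for every $x$ in the support of $\mathcal{D}$, the function $\mathrm{FLU}^\star(x) = p(x) = \mathbb{P}(\mathrm{MixLoRA}(x) = y^*(x))$ minimizes the integrand pointwise and hence minimizes $\bar{\mathcal{L}}_{\mathrm{cal}}$; strict convexity of the quadratic gives uniqueness (up to $\mathcal{D}$-null sets of $x$), which is the claimed statement.

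I would also need to address one gap that is glossed over in the statement: the optimization is over $\mathrm{FLU}$ as an abstract function of $x$, whereas in the implemented model $\mathrm{FLU}(x) = \tfrac{1}{L}\sum_\ell \sum_{i=1}^2 \tilde{R}_i^\ell(\mathbf{h}^\ell)$ is constrained to the parametric family realizable by the routers (and is further coupled to $\mathrm{MixLoRA}(x)$ itself, since both depend on the shared router weights). So I would state the proposition in the well-specified/realizable regime — i.e., assuming the router family is expressive enough that some parameter setting attains $\mathrm{FLU}(x) = p(x)$ while leaving the prediction map essentially unchanged, or, more cleanly, treating $\mathrm{MixLoRA}(\cdot)$ as fixed and optimizing only the routing weights that enter $\mathrm{FLU}$ — and note that without such an assumption one only obtains the $L^2(\mathcal{D})$-projection of $p(\cdot)$ onto the realizable family rather than $p(\cdot)$ exactly.

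The main obstacle is therefore not the analysis — the core argument is the textbook ``conditional expectation minimizes mean squared error'' lemma — but rather making precise the decoupling between the prediction $\mathrm{MixLoRA}(x)$ and the uncertainty estimate $\mathrm{FLU}(x)$, since they are entangled through the shared parameters in the actual architecture; I expect the cleanest route is to fix the predictor and optimize the FLU read-out, so that the indicator becomes an exogenous Bernoulli target and the pointwise argument goes through verbatim.
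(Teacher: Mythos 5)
Your proof is correct and is essentially the paper's own argument: expand the squared loss, use that the squared indicator term is constant in $\mathrm{FLU}$, and minimize the resulting pointwise quadratic $\mathrm{FLU}(x)^2 - 2\,\mathbb{P}(\mathrm{MixLoRA}(x)=y^*(x))\,\mathrm{FLU}(x)$ at $\mathrm{FLU}(x)=\mathbb{P}(\mathrm{MixLoRA}(x)=y^*(x))$, which is exactly the paper's derivation (your completing-the-square and tower-property framing is just a slightly more explicit presentation of the same step). Your added caveat about realizability and the coupling of $\mathrm{FLU}$ with the predictor through shared router weights is a fair observation but goes beyond the paper, which, like your main argument, treats $\mathrm{FLU}$ as an unconstrained function of $x$.
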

\normalsize
\subsection{Fine-tuning with the Total Loss}

Our proposed calibration loss $\mathcal{L}_{\mathrm{cal}}$ improves predictive reliability by adaptively balancing expert exploitation and exploration according to functional-level uncertainty. As shown in Figure~\ref{fig:Struct}, $\mathcal{L}_{\mathrm{cal}}$ aligns uncertainty with predictive correctness, increasing the router probability of the selected expert for correct predictions (exploitation) and decreasing it for incorrect ones (exploration).

Ideally, when the $K$ LoRA experts collectively capture the relevant functional relationships in the data during fine-tuning via cross-entropy loss, $\mathcal{L}_{\mathrm{cal}}$ further guides the model to select appropriate mixtures of LoRA experts conditioned on the input $\mathbf{x}$. This targeted selection enables the model to match the data distribution more closely and provides more calibrated uncertainty estimates.

To ensure balanced expert utilization, we incorporate a load balancing loss $\mathcal{L}_b$ as proposed by \citet{li2024mixlora}. Our overall loss function is:
\begin{equation} \label{eq:total_loss}
\mathcal{L} = \mathbf{CE} + \gamma \cdot \mathcal{L}_b + \beta \cdot \mathcal{L}_{\mathrm{cal}},
\end{equation}
where $\mathbf{CE}$ is the cross-entropy loss, and $\gamma$, $\beta$ are hyperparameters for the auxiliary terms. We fix $\gamma$, $\beta$ to $1$ for our experiments. Details of $\mathcal{L}_b$ are provided in Appendix~\ref{app:load}.

\section{Related Work}
\begin{table*}[t!]
\small
\centering
\caption{Performance comparison of different methods fine-tuned with Llama3.1-8B across three common sense reasoning tasks and a domain-specific task. UQ4CT shows substantial ECE improvements while maintaining high accuracy.}
\label{table:results}
\vspace{0.1cm}
\begin{tabular}{l l c c c c}
\toprule
\textbf{Metrics} & \textbf{Methods} & \textbf{ARC-E} & \textbf{ARC-C} & \textbf{OBQA} & \textbf{ClimateQA} \\
\midrule
\multirow{9}{*}{\textbf{ACC} $\uparrow$}
& Base Model & $87.27$ & $74.32$ & $72.80$ & $68.64$ \\
& LoRA      & $\underline{88.82_{1.82}}$ & $78.21_{0.73}$ & $88.00_{1.22}$ & $78.25_{1.29}$ \\
& MC Drop   & $88.16_{1.75}$ & $77.14_{0.69}$ & $87.12_{1.18}$ & $78.19_{1.35}$ \\
& Ensemble   & $\mathbf{89.14_{1.31}}$ & $78.81_{0.96}$ & $86.47_{0.42}$ & $78.53_{2.98}$ \\
& MixLoRA   & $87.74_{0.36}$ & $78.56_{1.87}$ & $\underline{88.27_{0.50}}$ & $\underline{79.94_{1.29}}$ \\
& LA        & $86.22_{3.52}$ & $78.00_{3.76}$ & $86.00_{6.01}$ & $79.82_{3.48}$ \\
& BLoB(Mean)      & $88.71_{0.82}$ & $79.37_{0.71}$ & $87.60_{1.04}$ & $79.02_{0.50}$ \\
& BLoB(N=10)      & $87.96_{0.62}$ & $\mathbf{80.08_{1.55}}$ & $87.13_{0.88}$ & $79.02_{0.50}$ \\
& UQ4CT     & $88.66_{0.20}$ & $\underline{79.60_{1.21}}$ & $\mathbf{88.40_{0.35}}$ & $\mathbf{79.97_{0.85}}$ \\
\midrule
\multirow{9}{*}{\textbf{ECE} $\downarrow$}
& Base Model & $13.76$ & $11.30$ & $11.39$ & $14.58$ \\
& LoRA      & $6.55_{1.70}$  & $14.07_{0.68}$ & $7.30_{0.43}$  & $13.70_{1.50}$ \\
& MC Drop   & $6.48_{1.74}$  & $14.12_{0.71}$ & $7.24_{0.39}$  & $13.11_{1.46}$ \\
& Ensemble   & $7.08_{0.73}$  & $13.71_{1.29}$ & $8.63_{0.38}$  & $14.69_{0.84}$ \\
& MixLoRA   & $7.79_{0.45}$  & $13.71_{1.90}$ & $6.58_{0.21}$  & $14.68_{0.09}$ \\
& LA        & $7.63_{1.71}$  & $8.92_{4.16}$ & $11.97_{5.97}$ & $11.48_{1.66}$ \\
& BLoB(Mean)    & $4.89_{0.32}$ & $11.26_{1.13}$  & $6.83_{0.90}$  & $12.74_{0.88}$ \\
& BLoB(N=10)      & $\mathbf{3.35_{0.50}}$ & $\underline{6.81_{1.43}}$  & $\underline{3.84_{1.00}}$  & $\underline{11.96_{2.57}}$ \\
& UQ4CT     & $\underline{3.97_{0.78}}$ & $\mathbf{4.43_{0.82}}$ & $\mathbf{3.34_{1.60}}$ & $\mathbf{9.36_{2.77}}$\\
\bottomrule
\end{tabular}
\end{table*}

\paragraph{Mixture of LoRA Experts.}
LLMs have achieved impressive performance across diverse NLP tasks~\citep{brown2020language, Hoffmann2022TrainingCL, Touvron2023LLaMAOA, Touvron2023Llama2O}, with instruction fine-tuning~\citep{Chung2022ScalingIL, Iyer2022OPTIMLSL, zheng2023judging} further boosting their adaptability for conversational AI~\citep{wu2023brief, gpt4}. However, scaling LLMs increases the resource demands of full fine-tuning. Parameter-efficient fine-tuning (PEFT) methods~\citep{peft} such as LoRA~\citep{Hu2021LoRALA} and its variants~\citep{kopiczko2024vera, hyeonwoo2023fedpara, renduchintala2023tiedlora, zhang2023adalora, liu2024dora} reduce adaptation costs by updating a subset of parameters.

Recent advances combine PEFT with the Mixture-of-Experts (MoE) framework~\citep{Jacobs1991AdaptiveMO, wang2020deep}, which sparsely activates expert subnetworks for greater model capacity and specialization. MoE-based LLMs leverage expert routing and parameter-efficient adaptations to target new domains or tasks efficiently. Notably, methods such as MoRAL~\citep{yang2024moral}, LoRAMoE~\citep{dou2024loramoe}, PESC~\citep{wu2024parameterefficient}, MoE-LoRA~\citep{luo2024moelora}, and MixLoRA~\citep{li2024mixlora} optimize domain-specific routing, mitigate forgetting, and enable scalable, high-throughput training and inference with mixtures of LoRA experts.

\paragraph{Uncertainty Quantification in LLMs.} 
Established uncertainty quantification methods have been studied in conjunction with the LoRA structure for LLMs. Monte-Carlo dropout \citep{gal2016dropout} interprets dropout in neural networks as approximate Bayesian inference in deep Gaussian processes, allowing uncertainty estimates to be obtained from existing LoRA adapters without modifying them. Checkpoint ensemble \citep{chen2017checkpoint} utilizes predictions from multiple LoRA checkpoints saved during a single fine-tuning process to calibrate uncertainty. Deep ensemble \citep{lakshminarayanan2017simple,wang2023lora,zhai2023uncertainty} combines the predictions from multiple LoRA adapters for better uncertainty calibration. Laplace-LoRA \citep{yang2024bayesian} applies Bayesian inference via Laplace approximation to the LoRA parameters after fine-tuning, resulting in improved calibration and uncertainty estimates. Bayesian Low-Rank Adaptation by Backpropagation (BLoB) \citep{wang2024blob} extends the LA method by jointly optimizing the mean and covariance of LoRA parameters via backpropagation throughout fine-tuning.

Prompt-perturbation and resampling-based approaches have also been explored for UQ in LLMs. These methods estimate uncertainty by measuring prediction variability under different prompt formulations or sampled input variants, without altering model parameters \citep{farquhar2024detecting, hou2023decomposing, gao2024spuq}. This line of work leverages the inherent sensitivity of LLMs to input perturbations as a means to assess model confidence, providing a complementary perspective to parameter-based methods.

\vspace{-4mm}
\section{Experiments}
\paragraph{Datasets.}
We evaluate on four multiple-choice QA benchmarks: OpenBookQA (OBQA)~\citep{OpenBookQA2018}, ARC-Easy (ARC-E), ARC-Challenge (ARC-C)~\citep{clark2018think}, and ClimateQA, a domain-specific climate science benchmark. To assess generalization to open-ended generation, we further evaluate on TruthfulQA~\citep{lin2021truthfulqa} and TriviaQA~\citep{joshi2017triviaqa}. For robustness under distribution shift, we group domain-specific MMLU subtasks~\citep{hendrycks2020measuring} into four professional-domain benchmarks: Computer Science (CS), Engineering (Eng), Law, and Health, with details provided in Appendix~\ref{app:OOD}. Models are fine-tuned on the public training split and evaluated on the corresponding test split for each benchmark.

\begin{table*}[t]
\small
\centering
\caption{Performance comparison of different methods fine-tuned on the OBQA dataset with Llama3.1-8B across two smaller distribution shift (DS) tasks and four larger distribution shift tasks. UQ4CT shows substantial ECE improvements while maintaining high accuracy.}
\label{table:DS_test}
\vspace{0.1cm}
\setlength{\tabcolsep}{3.5pt} 
\begin{tabular}{l l c c c c c c c}
\toprule
 & & \textbf{ID} & \multicolumn{2}{c}{\textbf{Smaller DS}} & \multicolumn{4}{c}{\textbf{Larger DS}} \\
\cmidrule(lr){3-3} \cmidrule(lr){4-5} \cmidrule(lr){6-9}
\textbf{Metrics} & \textbf{Methods} & \textbf{OBQA} & \textbf{ARC-C} & \textbf{ARC-E} & \textbf{CS} & \textbf{Eng} & \textbf{Law} & \textbf{Health} \\
\midrule
\multirow{8}{*}{\textbf{ACC} $\uparrow$} 
& LoRA       & $88.0_{1.22}$ & $77.8_{0.16}$ & $86.7_{0.77}$ & $\mathbf{55.8_{0.52}}$ & $54.3_{3.30}$ & $44.9_{0.23}$ & $58.8_{0.21}$ \\
& MC Drop    & $87.1_{1.18}$ & $77.1_{2.01}$ & $86.9_{2.42}$ & $54.4_{1.58}$ & $54.1_{1.82}$ & $45.0_{0.76}$ & $58.3_{1.46}$ \\
& Ensemble   & $86.5_{0.42}$ & $78.2_{0.90}$ & $85.4_{0.47}$ & $53.8_{1.02}$ & $52.4_{0.56}$ & $45.0_{0.20}$ & $60.6_{0.57}$ \\
& MixLoRA    & $\underline{88.3_{0.50}}$ & $78.1_{0.45}$ & $86.7_{0.35}$ & $53.1_{1.14}$ & $\underline{54.7_{2.28}}$ & $45.0_{1.46}$ & $\underline{60.9_{1.04}}$ \\
& LA         & $86.0_{6.01}$ & $78.7_{0.55}$ & $86.4_{0.76}$ & $\underline{54.7_{1.82}}$ & $53.6_{2.77}$ & $44.9_{1.03}$ & $59.7_{0.94}$ \\
& BLoB(Mean)  & $87.6_{1.04}$ & $\underline{79.5_{1.10}}$ & $86.6_{0.65}$ & $51.2_{0.99}$ & $48.6_{1.44}$ & $39.9_{7.85}$ & $57.0_{3.51}$ \\
& BLoB(N=10) & $87.1_{0.88}$ & $\mathbf{79.8_{1.06}}$ & $\underline{87.2_{0.79}}$ & $52.8_{1.28}$ & $51.9_{3.13}$ & $43.8_{4.60}$ & $58.5_{5.33}$ \\
& UQ4CT      & $\mathbf{88.4_{0.35}}$ & $79.0_{0.56}$ & $\mathbf{87.8_{0.47}}$ & $53.3_{0.61}$ & $\mathbf{61.1_{3.20}}$ & $\mathbf{45.4_{0.50}}$ & $\mathbf{61.1_{1.48}}$ \\
\midrule
\multirow{8}{*}{\textbf{ECE} $\downarrow$} 
& LoRA       & $7.30_{0.43}$ & $14.8_{0.62}$ & $9.6_{0.69}$ & $21.0_{3.05}$ & $24.1_{3.63}$ & $29.3_{1.98}$ & $24.0_{1.81}$ \\
& MC Drop    & $7.24_{0.39}$ & $13.4_{2.15}$ & $10.2_{1.89}$ & $20.8_{3.26}$ & $24.1_{0.77}$ & $29.1_{0.64}$ & $21.6_{3.92}$ \\
& Ensemble   & $8.63_{0.38}$ & $15.4_{0.46}$ & $10.7_{0.56}$ & $14.0_{3.18}$ & $17.4_{1.98}$ & $19.9_{2.95}$ & $16.1_{2.07}$ \\
& MixLoRA    & $6.58_{0.21}$ & $14.5_{0.55}$ & $9.9_{0.20}$ & $17.1_{3.06}$ & $17.8_{2.80}$ & $21.6_{4.07}$ & $18.0_{2.78}$ \\
& LA         & $11.97_{5.97}$ & $7.2_{0.5}$ & $6.4_{0.42}$ & $13.7_{2.14}$ & $15.5_{2.0}$ & $\underline{19.0_{0.54}}$ & $15.7_{2.30}$  \\
& BLoB(Mean)  & $6.83_{0.90}$ & $11.37_{1.94}$ & $6.6_{1.65}$ & $17.2_{2.72}$ & $18.5_{2.82}$ & $22.6_{1.26}$ & $16.9_{3.02}$ \\
& BLoB(N=10) & $\underline{3.84_{1.00}}$ & $\underline{5.8_{0.96}}$ & $\mathbf{3.0_{0.87}}$ & $\underline{11.5_{2.76}}$ & $\underline{14.9_{1.93}}$ & $19.7_{3.21}$ & $\underline{14.5_{3.38}}$ \\
& UQ4CT      & $\mathbf{3.34_{1.60}}$ & $\mathbf{3.6_{1.44}}$ & $\underline{3.6_{1.32}}$ & $\mathbf{10.8_{3.73}}$ & $\mathbf{13.2_{1.86}}$ & $\mathbf{18.1_{4.40}}$ & $\mathbf{13.2_{4.06}}$ \\
\bottomrule
\end{tabular}
\end{table*}

\paragraph{Experiment Setup.}

We implement UQ4CT with PyTorch \citep{paszke2019pytorch}, extending the MixLoRA repository in \citep{li2024mixlora}. We use the Llama-3.1-8B \citep{grattafiori2024llama} as our base model. In particular, we apply MixLoRA to query, key, value and output layers, together with the feed-forward networks in LLaMA-3.1-8B (gate layer, down layer and up layer). Details are provided in Appendix \ref{app:implement_detail}. 

\paragraph{Baselines.} We compare UQ4CT with state-of-the-art uncertainty estimation methods along with naive fine-tuning applied to the LoRA adapters of LLMs, including \textbf{LoRA} \citep{hu2021lora}, \textbf{Monte Carlo (MC) Dropout} \citep{gal2016dropout}, \textbf{Deep Ensemble} \citep{lakshminarayanan2017simple}, \textbf{Laplace-LoRA (LA)} \citep{yang2024bayesian}, \textbf{Bayesian Low-Rank Adaptation by Backpropagation (BLoB)} \citep{wang2024blob} and \textbf{MixLoRA} \citep{li2024mixlora}. Note that for \textbf{BLoB(N=10)}, the method performs $10$ forward passes with differently sampled LoRA parameters for each question, which is an unfair computational budget advantage compared against UQ4CT with only $1$ forward pass.

\vspace{-2mm}

\paragraph{Evaluation.}
We evaluate prediction accuracy on the validation set across all four tasks. For uncertainty calibration, we use the expected calibration error (ECE; \cite{guo2017calibration}; more details in \ref{app:ece}) to measure the alignment between predicted probabilities and actual outcomes.

To assess robustness under distribution shifts, we fine-tune models on the OBQA dataset and evaluate them following \citet{yang2024bayesian}. We use ARC-C and ARC-E to represent smaller distribution shifts, as these datasets focus on general science reasoning similar to OBQA but are more challenging and diverse. For larger shifts, we utilize the four aforementioned domain-specific MMLU subtasks, which span a wide range of expertise from elementary to professional levels. This domain represents a greater distribution shift from the general common sense focus of OBQA.

The in-distribution setting evaluates alignment with the target task, while the distribution-shift setting assesses generalization to novel domains. Together, they provide a practical evaluation of both task-specific performance and robustness to out-of-distribution inputs.

\subsection{In-distribution Performance}
As shown in Table \ref{table:results}, UQ4CT achieves notable gains in uncertainty calibration across diverse tasks while maintaining competitive accuracy relative to baseline approaches. The accuracy remains on par with or above the best baselines, demonstrating that improved uncertainty quantification does not come at the expense of predictive performance.

The primary gains are in calibration. UQ4CT consistently achieves the lowest ECE across benchmarks, with especially large improvements on challenging tasks where baselines remain poorly calibrated. Notably, UQ4CT requires only a single forward pass, whereas BLoB ($N=10$) uses ten forward passes yet still yields higher ECE on most benchmarks.

To further validate our method, we include results from fine-tuning both LLaMA-3.1-8B (main text) and Mistral-7B (Appendix \ref{app:mistral}). Across both models, UQ4CT delivers substantial and consistent improvements in uncertainty calibration. A key advantage of UQ4CT is that it incorporates uncertainty calibration directly during fine-tuning, incurring minimal computational overhead compared to other UQ methods that require costly repetitive sampling or post-hoc adjustments.

\subsection{Performance Under Distribution Shift}
Due to the sparse nature of the fine-tuning data, real world deployment of LLMs often requires the model to be robust to out-of-distribution knowledge \citep{ouyang2022training,touvron2023llama,touvron2023llama2}. Therefore, we evaluate the performance of UQ4CT along with other baseline models fine-tuned on the OBQA dataset under smaller and larger distribution shift scenarios. 

Table~\ref{table:DS_test} presents the distribution shift evaluations. UQ4CT achieves substantial ECE improvements while maintaining high accuracy across smaller and larger distribution shifts. For smaller shifts, UQ4CT's ECE remains comparable to the in-distribution scenario. Under larger shifts, UQ4CT attains the lowest ECE among baselines and delivers competitive accuracy on all domain-specific tasks. These results demonstrate that aligning uncertainty at the functional level with predictive correctness improves generalizability and mitigates overconfidence in fine-tuned models.


\subsection{Ablation Studies}
We conduct ablation studies to assess the contribution of the calibration loss, $\mathcal{L}_{cal}$. We analyze sensitivity to $\beta$ in Equation~\ref{eq:total_loss} (Appendix~\ref{app:beta_sensitivity}), showing that small values already yield substantial ECE gains and that $\beta=1$ provides the best trade-off. We further examine the number of active LoRA experts (Appendix~\ref{app:num_experts}), incremental calibration weighting (Appendix~\ref{app:incremental_weighting}), and comparisons with prompt-perturbation methods (Appendix~\ref{app:ppc}). Finally, we provide a detailed peak memory analysis comparing LoRA, UQ4CT, and BLoB in Appendix~\ref{app:memory_analysis}, showing that UQ4CT and BLoB have nearly identical peak memory (${\sim}$21.35~GB), both only ${\sim}$7.5\% above standard LoRA.

\textbf{Open-Ended Generative Extension.} \label{sec:generative}
To explore the potential of UQ4CT extending beyond multiple-choice tasks, we evaluate on TruthfulQA and TriviaQA using an F1-based correctness proxy for calibration. For controlled comparison, we first fine-tune the base model with MixLoRA and then apply all baselines to the same fine-tuned backbone. Setup details, including the adapted calibration loss and baselines, are provided in Appendix~\ref{app:generative_setup}.
\begin{table}[t]
\small
\centering
\caption{All UQ methods applied to the \textbf{same MixLoRA-fine-tuned} Llama-3.1-8B-Instruct on open-ended generative benchmarks. Sampling-based methods (marked with *) use $N=5$ samples. SE: Semantic Entropy, SD: Semantic Density, VC: Verbalized Consistency.}
\label{table:generative}
\vspace{0.1cm}
\resizebox{0.9\linewidth}{!}{
\begin{tabular}{l l c c c c c c}
\toprule
\textbf{Task} & \textbf{Metrics} & \textbf{SE*} & \textbf{SD*} & \textbf{SeqProb} & \textbf{Verb.} & \textbf{VC*} & \textbf{UQ4CT} \\
\midrule
\multirow{2}{*}{\textbf{TruthfulQA}}
 & ACC $\uparrow$ & $86.66_{0.58}$ & $86.66_{0.58}$ & $84.19_{0.72}$ & $85.63_{0.41}$ & $87.49_{0.39}$ & $\mathbf{94.60}_{0.32}$ \\
 & ECE $\downarrow$ & $18.19_{1.22}$ & $16.65_{1.31}$ & $9.93_{1.38}$  & $13.00_{1.35}$ & $11.35_{1.24}$ & $\mathbf{6.08}_{1.72}$ \\
\midrule
\multirow{2}{*}{\textbf{TriviaQA}}
 & ACC $\uparrow$ & $71.82_{0.63}$ & $71.82_{0.63}$ & $71.61_{0.78}$ & $72.05_{0.36}$ & $72.08_{0.36}$ & $\mathbf{72.40}_{0.93}$ \\
 & ECE $\downarrow$ & $8.56_{1.17}$  & $13.10_{1.38}$ & $19.23_{1.45}$ & $20.54_{1.28}$ & $11.01_{1.30}$ & $\mathbf{6.63}_{1.50}$ \\
\bottomrule
\end{tabular}
}
\end{table}

As shown in Table~\ref{table:generative}, UQ4CT achieves the lowest ECE on both benchmarks while maintaining the highest accuracy with only a single forward pass. The gains are particularly pronounced on TruthfulQA, where baselines show substantially higher ECE, indicating that our calibration loss improves MoE routing toward more accurate expert selection in generative settings.

\section{Discussion \& Conclusion}
In this work, we propose Functional-Level Uncertainty Quantification for Calibrated Fine-Tuning (UQ4CT), a framework for mitigating overconfidence in fine-tuned large language models. UQ4CT quantifies uncertainty from a functional perspective and incorporates it into a mixture-of-experts fine-tuning objective. The resulting calibration-aware loss substantially improves uncertainty calibration while preserving predictive accuracy. Across common-sense reasoning, domain-specific QA, and open-ended generation tasks, UQ4CT reduces Expected Calibration Error by more than 25\% without compromising accuracy under both in-distribution and out-of-distribution settings.

UQ4CT's calibration loss requires a correctness signal during training. For multiple-choice tasks, exact-match provides this naturally. Our exploratory experiments on open-ended generative QA (Sec.~\ref{sec:generative}) demonstrate that a simple token-level F1 threshold serves as a promising correctness proxy, suggesting that the functional-space perspective on calibration extends beyond classification-style tasks. However, this work primarily establishes the foundations of functional-level uncertainty quantification for calibrated fine-tuning; scaling to long-form reasoning or multi-step generation remains an important direction for future work. Potential avenues include leveraging reward models or learned evaluation functions as soft correctness proxies to broaden applicability.


\section*{Acknowledgement}
This work was supported in part by  U. S. Army Research Office
under Army-ECASE award W911NF-07-R-0003-03, the U.S. Department Of Energy, Office of Science, IARPA HAYSTAC Program, and NSF Grants \#2205093, \#2146343, \#2134274, CDC-RFA-FT-23-0069,  DARPA AIE FoundSci and DARPA YFA.

\newpage

\bibliography{ref}
\bibliographystyle{unsrtnat}

\newpage
\appendix
\onecolumn
\section{Appendix}
\subsection{Theoretical Derivation of the Method} \label{app:derivation}
In this section, we provide complete theoretical statements and proofs that are used in Sec.~\ref{sec:method} to derive our method.

\begin{fact}[Model Perturbation Structure, Restatement of Fact~\ref{fact:model}]
\label{fact:model_appnd}
Assume that in the residual connection architecture in each layer: $g_{k^\ell}^\ell (\mathbf{h}^\ell) = \mathbf{h}^\ell + E_{k^\ell}^\ell \big( f_{trans}^\ell (\mathbf{h}^\ell) \big)$, the Lipschitzness of the residual term $E_{k^\ell}^\ell \big( f_{trans}^\ell (\mathbf{h}^\ell) \big)$ is much smaller as compared to that of $\mathbf{h}^{\ell}$ itself: $\left\| E_{k^\ell}^\ell \big( f_{trans}^\ell (\hat{\mathbf{h}}^\ell) \big) - E_{k^\ell}^\ell \big( f_{trans}^\ell (\mathbf{h}^\ell) \big) \right\| = o\left( \| \hat{\mathbf{h}}^\ell - \mathbf{h}^\ell \| \right)$.
Under this regularity assumption, perturbations $\Delta f(x)$ to the model $f(x)$ approximately decomposes as:
\[
\Delta f(x) \approx \sum_{\ell=1}^{L} \sum_{k^\ell=1}^K \Delta \alpha_{k^\ell}^\ell(\mathbf{h}^\ell) \cdot g_{k^\ell}^\ell(\mathbf{h}^\ell).
\]
\end{fact}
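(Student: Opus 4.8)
The plan is to expand $f(x)$ as a composition of the layer maps $g^\ell$ with input-dependent mixture weights $\alpha^\ell_{k^\ell}(\mathbf{h}^\ell)$ (as in Eq.~\ref{eq:sub_alpha}), perturb the weights $\alpha^\ell_{k^\ell} \mapsto \alpha^\ell_{k^\ell} + \Delta\alpha^\ell_{k^\ell}$, and track how the perturbation propagates forward through the residual stack. First I would set up notation: let $\mathbf{h}^\ell$ denote the (unperturbed) hidden state at layer $\ell$, and $\hat{\mathbf{h}}^\ell$ the perturbed one, with $\Delta\mathbf{h}^\ell := \hat{\mathbf{h}}^\ell - \mathbf{h}^\ell$ and $\Delta\mathbf{h}^1 = 0$ (the input prompt is fixed). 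The recursion is $\mathbf{h}^{\ell+1} = \sum_{k^\ell} \alpha^\ell_{k^\ell}(\mathbf{h}^\ell)\, g^\ell_{k^\ell}(\mathbf{h}^\ell)$ versus $\hat{\mathbf{h}}^{\ell+1} = \sum_{k^\ell} (\alpha^\ell_{k^\ell}(\hat{\mathbf{h}}^\ell) + \Delta\alpha^\ell_{k^\ell}(\hat{\mathbf{h}}^\ell))\, g^\ell_{k^\ell}(\hat{\mathbf{h}}^\ell)$, and I would subtract these and organize the difference into three contributions: (i) the explicit perturbation $\sum_{k^\ell}\Delta\alpha^\ell_{k^\ell}(\mathbf{h}^\ell)\,g^\ell_{k^\ell}(\mathbf{h}^\ell)$; (ii) the propagated error from $\Delta\mathbf{h}^\ell$ entering the residual identity branch; and (iii) higher-order cross terms and the propagated error entering through $f^\ell_{trans}$ in the adapters and routers.

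The key step is to show that the residual/identity branch passes $\Delta\mathbf{h}^\ell$ through essentially unchanged while the adapter branch contracts it. Writing $g^\ell_{k^\ell}(\hat{\mathbf{h}}^\ell) - g^\ell_{k^\ell}(\mathbf{h}^\ell) = \Delta\mathbf{h}^\ell + \big(E^\ell_{k^\ell}(f^\ell_{trans}(\hat{\mathbf{h}}^\ell)) - E^\ell_{k^\ell}(f^\ell_{trans}(\mathbf{h}^\ell))\big)$, the regularity assumption of Fact~\ref{fact:model_appnd} says the bracketed adapter difference is $o(\|\Delta\mathbf{h}^\ell\|)$. Since $\sum_{k^\ell}\alpha^\ell_{k^\ell} = 1$ (the router outputs a normalized top-$k$ distribution), the identity contributions combine to exactly $\Delta\mathbf{h}^\ell$, so $\Delta\mathbf{h}^{\ell+1} = \sum_{k^\ell}\Delta\alpha^\ell_{k^\ell}(\mathbf{h}^\ell)\,g^\ell_{k^\ell}(\mathbf{h}^\ell) + \Delta\mathbf{h}^\ell + o(\|\Delta\mathbf{h}^\ell\|) + (\text{second order in }\Delta\alpha)$. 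Unrolling this recursion from $\ell=1$ to $L$ with $\Delta\mathbf{h}^1 = 0$ telescopes the identity terms and yields $\Delta f(x) = \Delta\mathbf{h}^{L+1} \approx \sum_{\ell=1}^{L}\sum_{k^\ell=1}^{K}\Delta\alpha^\ell_{k^\ell}(\mathbf{h}^\ell)\,g^\ell_{k^\ell}(\mathbf{h}^\ell)$, discarding the accumulated $o(\cdot)$ and second-order remainders. I would also note that the routers themselves shift slightly because $\mathbf{h}^\ell$ shifts, but that effect is exactly the ``$\Delta\alpha$ induced by $\Delta\mathbf{h}$'' piece, which is higher order once we are already tracking the leading explicit $\Delta\alpha$ term.

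The main obstacle is making the ``approximately'' rigorous: the $o(\|\Delta\mathbf{h}^\ell\|)$ terms accumulate across $L$ layers, so one needs a Grönwall-type bound showing $\|\Delta\mathbf{h}^\ell\|$ stays $O(\sum\|\Delta\alpha\|)$ and the collected remainder is $o(\sum\|\Delta\alpha\|)$ uniformly in $L$ — this requires the adapter contraction factor to be strictly less than $1$ (or summably close to it), which is what the $o(\cdot)$ phrasing is implicitly buying. Since the statement is labeled a Fact and only claims an approximate decomposition, I would present this as a first-order (linearization) argument: treat $\Delta\alpha$ as infinitesimal, linearize each layer map, drop all $o$-terms, and observe the identity branch carries the first-order perturbation forward additively while the adapter branch's Jacobian contribution is negligible by assumption. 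The honest version of the proof is therefore short — set up the recursion, invoke $\sum_k\alpha^\ell_k=1$ and the Lipschitz assumption, unroll — with the caveat that uniform-in-$L$ control of the error is the place where additional hypotheses (a spectral/contraction bound on the adapters) would be needed for a fully quantitative statement.
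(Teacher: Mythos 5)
Your proposal follows essentially the same argument as the paper's proof: perturb the layer-wise recursion, use the residual structure plus the $o(\|\Delta\mathbf{h}^\ell\|)$ assumption so that $\Delta g^\ell_{k^\ell}(\mathbf{h}^\ell) \approx \Delta\mathbf{h}^\ell$, invoke $\sum_{k^\ell}\alpha^\ell_{k^\ell}=1$ to carry $\Delta\mathbf{h}^\ell$ forward additively, and unroll from $\Delta\mathbf{h}^1=0$ to telescope into the claimed sum. You are in fact somewhat more explicit than the paper about the discarded second-order $\Delta\alpha\cdot\Delta g$ cross terms and the accumulation of $o(\cdot)$ remainders across layers, but the route is the same.
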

\begin{proof}[Proof of Fact~\ref{fact:model} and~\ref{fact:model_appnd}]
In each layer, we can decompose the perturbation to the output as follows:
\begin{align*}
\Delta \mathbf{h}^{\ell+1} = \sum_{k^\ell=1}^K \left[ \Delta \alpha_{k^\ell}^\ell(\mathbf{h}^\ell) \cdot g_{k^\ell}^\ell(\mathbf{h}^\ell) + \alpha_{k^\ell}^\ell(\mathbf{h}^\ell) \cdot \Delta g_{k^\ell}^\ell(\mathbf{h}^\ell) \right], \quad \forall \ell=1,\dots, L,
\end{align*}
where the input $\mathbf{h}^1=x$ and the output $f(x)=\mathbf{h}^{L+1}$. 

Due to the residual connection architecture and our assumption on the regularity of the residual term, we have:
\[
\Delta g_{k^\ell}^\ell(\mathbf{h}^\ell) 
= \Delta \mathbf{h}^\ell + \Delta E_{k^\ell}^\ell \left( f_{trans}^\ell (\mathbf{h}^\ell) \right)
= \Delta \mathbf{h}^\ell + o\left(\Delta \mathbf{h}^\ell\right).
\]
Hence,
\begin{align*}
\Delta \mathbf{h}^{\ell+1} \approx \sum_{k^\ell=1}^K \left( \Delta \alpha_{k^\ell}^\ell(\mathbf{h}^\ell) \cdot g_{k^\ell}^\ell(\mathbf{h}^\ell) + \alpha_{k^\ell}^\ell(\mathbf{h}^\ell) \cdot \Delta \mathbf{h}^\ell \right)
= \sum_{k^\ell=1}^K \Delta \alpha_{k^\ell}^\ell(\mathbf{h}^\ell) \cdot g_{k^\ell}^\ell(\mathbf{h}^\ell) + \Delta \mathbf{h}^\ell.
\end{align*}
Expanding this recursion, the output perturbation can be approximated as:
\[
\Delta f(x) = \Delta \mathbf{h}^{L+1} \approx \sum_{\ell=1}^{L} \sum_{k^\ell=1}^K \Delta \alpha_{k^\ell}^\ell(\mathbf{h}^\ell) \cdot g_{k^\ell}^\ell(\mathbf{h}^\ell).
\]
\end{proof}

\begin{proposition}[Calibration Loss, Restatement of Proposition~\ref{prop:calib}]
\label{prop:calib_appnd}
Let the calibration risk be defined as the expectation of the calibration loss over the data distribution:
\begin{equation*}
\bar{\mathcal{L}}_{\mathrm{cal}} = \mathbb{E}_{(x,y^*)\sim\mathcal{D}}\left( \mathbbm{1}\{\mathrm{MixLoRA}(x) = y^*\} - \mathrm{FLC}(x) \right)^2.
\end{equation*}
If this calibration risk is optimized over the data distribution $\mathcal{D}$, then the optimal solution is $\mathrm{FLC}(x) = \mathbb{P}(\mathrm{MixLoRA}(x) = y^*(x))$; that is, the optimally trained FLC corresponds to the probability that the model's prediction is correct.
\end{proposition}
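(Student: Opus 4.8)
The plan is to recognize $\bar{\mathcal{L}}_{\mathrm{cal}}$ as a standard squared-error regression risk in which the target is the binary random variable $Z(x,y^*) = \mathbbm{1}\{\mathrm{MixLoRA}(x) = y^*\}$ and the predictor is the function $x \mapsto \mathrm{FLU}(x)$. Since $\mathrm{FLU}(x)$ depends on the data only through the prompt $x$, the minimizer over all measurable functions of $x$ is the conditional expectation of the target given $x$. First I would condition on $x$ and write $\bar{\mathcal{L}}_{\mathrm{cal}} = \mathbb{E}_x\left[ \mathbb{E}_{y^* \mid x}\left( Z(x,y^*) - \mathrm{FLU}(x) \right)^2 \right]$, so that it suffices to minimize the inner conditional risk pointwise in $x$.

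Next I would carry out the one-dimensional optimization: for fixed $x$, set $c = \mathrm{FLU}(x)$ and expand $\mathbb{E}_{y^* \mid x}(Z - c)^2 = \mathbb{E}_{y^* \mid x}[Z^2] - 2c\,\mathbb{E}_{y^* \mid x}[Z] + c^2$. Because $Z \in \{0,1\}$ we have $Z^2 = Z$, so this is a convex quadratic in $c$ whose unique minimizer is $c^\star = \mathbb{E}_{y^* \mid x}[Z] = \mathbb{P}(\mathrm{MixLoRA}(x) = y^*(x) \mid x)$. Substituting back, the optimal $\mathrm{FLU}$ is exactly the prompt-conditional probability that the model prediction is correct, which is the claimed characterization. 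I would also note the familiar bias–variance identity $\mathbb{E}_{y^* \mid x}(Z-c)^2 = \mathrm{Var}(Z \mid x) + (c - c^\star)^2$ to make the pointwise optimality transparent and to exhibit the irreducible residual risk.

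One subtlety worth addressing explicitly is the range constraint: $\mathrm{FLU}(x)$ as actually implemented is an average of top-$2$ softmax routing weights across layers, so it need not be able to represent an arbitrary value in $[0,1]$. The cleanest way to handle this is to state the proposition as an unconstrained variational result (the optimum over all functions of $x$ is the correctness probability) and remark that whenever the FLU parametrization is expressive enough to realize $x \mapsto \mathbb{P}(\mathrm{MixLoRA}(x)=y^*(x))$, that function is the global optimizer; otherwise the optimizer is its $L^2(\mathcal{D})$-projection onto the realizable class, which is still the best calibrated estimate in the mean-square sense. I expect this representability caveat — rather than the optimization itself — to be the main point requiring care, since the core computation is just completing the square.

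A further minor point: $\mathrm{MixLoRA}(x)$ is treated here as a deterministic function of $x$ (the arg-max decoding), so the only randomness in $Z$ given $x$ comes from the label $y^*$; if one instead wanted to account for stochastic decoding the same argument goes through with the conditional expectation taken over both sources of randomness, and the conclusion reads $\mathrm{FLU}(x) = \mathbb{P}(\mathrm{MixLoRA}(x) = y^*(x) \mid x)$ verbatim. I would close by observing that this is precisely the statement that minimizing $\bar{\mathcal{L}}_{\mathrm{cal}}$ enforces calibration of the functional-level uncertainty against predictive correctness, matching the informal discussion preceding the proposition.
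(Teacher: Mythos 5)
Your proposal is correct and follows essentially the same route as the paper's proof: both reduce the risk to a quadratic in $\mathrm{FLU}(x)$ (you by conditioning on $x$ and completing the square, the paper by expanding the expectation and dropping the constant term), with the minimizer being the conditional correctness probability. Your added remarks on the representability of the FLU parametrization and on deterministic decoding are sensible caveats but not needed for the statement as posed.
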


\begin{proof}[Proof of Proposition~\ref{prop:calib} and~\ref{prop:calib_appnd}]
Expanding the calibration risk, we have:
\begin{align*}
\bar{\mathcal{L}}_{\mathrm{cal}}
&= \mathbb{E}_{(x,y^*)\sim\mathcal{D}}\left(\mathbbm{1}\{\mathrm{MixLoRA}(x) = y^*\} - \mathrm{FLC}(x) \right)^2 \\
&= \mathbb{E}_{(x,y^*)\sim\mathcal{D}}
\left[ \mathbbm{1}\{\mathrm{MixLoRA}(x) = y^*\} \right]^2 \\
&\quad - 2\,\mathbb{E}_{x} \mathbb{E}_{y^*|x} \left[ \mathbbm{1}\{\mathrm{MixLoRA}(x) = y^*\} \cdot \mathrm{FLC}(x) \right] + \mathbb{E}_{x} \left[ \mathrm{FLC}(x)^2 \right] \\
&= C + \mathbb{E}_{x}\left[
\mathrm{FLC}(x)^2 - 2\,\mathbb{P}(\mathrm{MixLoRA}(x) = y^*(x)) \cdot \mathrm{FLC}(x)
\right],
\end{align*}
where $C$ is a constant independent of $\mathrm{FLC}(x)$. Thus, the calibration risk is minimized when $\mathrm{FLC}(x) = \mathbb{P}(\mathrm{MixLoRA}(x) = y^*(x))$.
\end{proof}

\subsection{Load Balancing Loss} \label{app:load}
We follow the load balancing loss in \citep{li2024mixlora}. Given $N$ experts indexed by $i=1$ to $N$ and a batch $B$ with $T$ tokens, the auxiliary loss is computed as: 
\begin{align}
    \mathcal{L}_{aux}=a \cdot N \cdot \sum^{N}_{i=1}\mathcal{F}_i \cdot \mathcal{P}_i,
\end{align}
where
\begin{align}
    \mathcal{F}_i = \frac{1}{T}\sum_{x\in B} \mathbbm{1}\{argmax_k \mathcal{R}(x)_k=i\}, \mathcal{P}_i = \frac{1}{T}\sum_{x\in B}\mathcal{R}(x)_i.
\end{align}

Here, $\mathcal{R}(\cdot)$ is the top-k router, $\mathcal{F}_i$ is the fraction of tokens dispatched to expert $i$ and $\mathcal{P}_i$
is the fraction of the router probability allocated for expert $i$. The final loss is multiplied by the expert count $N$ to keep the loss constant as the number of experts varies, and the constant term $a$ is set to $10^{-2}$as a multiplicative coefficient, which is large enough to ensure load balancing while remaining small enough not to overwhelm the primary objective.

\subsection{Experimental Results with Mistral-7B} \label{app:mistral}
In this section, we present the results using Mistral-7B \citep{jiang2023mistral}, a different decoder-based LLM backbone. Table \ref{table:mistral} shows the results of fine-tuning Mistral-7B on $3$ common-sense reasoning tasks and one domain-specific climate question-answering task.

For each of the tasks, UQ4CT effectively calibrates the parameter mixtures, leading to the best ECE performance in $3$ out of $4$ tasks. This indicates the robustness of UQ4CT across different LLMs.

\begin{table*}[t!]
\small
\centering
\caption{Performance comparison of different methods fine-tuned with Mistral-7B across $3$ common sense reasoning tasks and a domain-specific task. UQ4CT shows significant ECE improvements while maintaining high accuracy.}
\label{table:mistral}
\vspace{0.1cm}
\begin{tabular}{l||l|llll}
Metrics                                                                 & Methods  & ARC-E                 & ARC-C                  & OBQA                   & ClimateQA              \\ \hline \hline
\multirow{6}{*}{ACC $\uparrow$ \rule{0pt}{2.25ex}}   & LoRA     & $84.8_{0.47}$         & $70.2_{0.84}$          & $82.8_{0.62}$          & $72.5_{1.6}$           \\
                                                                        & MC Drop  & $84.6_{0.91}$         & $69.6_{0.76}$          & $82.6_{0.71}$          & $72.5_{1.6}$           \\
                                                                        & Ensemble & $84.2_{0.66}$         & $71.0_{1.41}$          & $82.5_{0.6}$           & $72.9_{2.88}$          \\
                                                                        & LA       & $82.4_{2.05}$         & $68.5_{3.31}$          & $82.5_{0.77}$          & $71.6_{1.56}$          \\
                                                                        & MixLoRA  & $85.5_{1.27}$         & $71.2_{1.75}$          & $83.3_{1.14}$          & $72.0_{1.69}$          \\
                                                                        & UQ4CT    & $\mathbf{85.9_{0.82}}$ & $\mathbf{74.4_{0.82}}$ & $\mathbf{83.7_{1.22}}$ & $\mathbf{73.2_{1.29}}$          \\ \hline \hline
\multirow{6}{*}{ECE $\downarrow$ \rule{0pt}{2.25ex}} & LoRA     & $9.46_{1.62}$         & $18.42_{1.91}$         & $13.3_{0.25}$          & $13.72_{2.62}$         \\
                                                                        & MC Drop  & $8.91_{1.35}$         & $18.38_{1.66}$         & $13.3_{0.31}$          & $13.72_{2.61}$         \\
                                                                        & Ensemble & $8.72_{1.49}$         & $17.0_{0.97}$          & $9.14_{2.82}$          & $12.86_{1.78}$         \\
                                                                        & LA       & $20.3_{5.7}$          & $21.27_{4.15}$         & $\mathbf{6.41_{3.22}}$ & $14.64_{2.21}$         \\
                                                                        & MixLoRA  & $8.16_{0.99}$         & $15.51_{3.86}$         & $10.53_{1.73}$         & $14.05_{3.09}$         \\
                                                                        & UQ4CT    & $\mathbf{5.7_{0.69}}$ & $\mathbf{7.04_{0.58}}$ & $\underline{7.92_{1.14}}$          & $\mathbf{11.4_{1.14}}$
\end{tabular}
\end{table*}

\subsection{Sensitivity Test on Calibration Term} \label{app:beta_sensitivity}
To understand the effectiveness of the calibration loss, we perform a sensitivity test of $\beta$ in Equation \ref{eq:total_loss}. This evaluates how our proposed calibration of parameter mixtures affects the overall model prediction and uncertainty quantification capabilities. We evaluate $\beta$ values ranging from $0$ to $1.5$, where $\beta=0$ resembles the original MixLoRA method.

\begin{table*}[t]
\centering
\caption{Performance of UQ4CT with varying $\beta$ values on the OBQA dataset. Prediction accuracy and uncertainty calibration improve with increasing $\beta$, highlighting the effectiveness of $\mathcal{L}_{\mathrm{cal}}$.}
\label{table:beta}
\vspace{0.1cm}
\resizebox{\linewidth}{!}{
\begin{tabular}{l c c c c c c c}
\toprule
\textbf{$\beta$} & $0$ & $0.2$ & $0.5$ & $0.8$ & $1$ & $1.2$ & $1.5$ \\
\midrule
\textbf{ACC} $\uparrow$ & $87.0_{2.85}$ & $87.1_{0.58}$ & $87.1_{0.29}$ & $87.3_{0.38}$ & $\mathbf{88.4_{0.35}}$ & $88.3_{0.57}$ & $87.5_{0.88}$ \\
\midrule
\textbf{ECE} $\downarrow$ & $12.7_{1.92}$ & $7.35_{0.75}$ & $7.69_{0.89}$ & $5.82_{1.12}$ & $\mathbf{3.34_{1.60}}$ & $6.31_{2.58}$ & $9.03_{0.82}$ \\
\bottomrule
\end{tabular}
}
\end{table*}

Results in Table \ref{table:beta} demonstrate the effectiveness of the calibration loss. Without calibration ($\beta=0$), the model exhibits high ECE. Even small values ($\beta=0.2$ or $0.5$) yield substantial ECE improvements. The best trade-off is at $\beta=1$, where the calibration term effectively optimizes the conditional parameter mixtures to fit the data distribution, achieving both the lowest ECE and highest accuracy. Beyond $\beta=1$, the calibration term begins to dominate the training objective, degrading both metrics.

\subsection{Number of Active Experts} \label{app:num_experts}
One important aspect of the LoRA MoE architecture is how many experts to activate. We investigate the performance impact of different numbers of active LoRA experts, evaluating the model with $1$ to $5$ active experts out of $8$ total.

\begin{table}[t]
\small
\centering
\caption{Performance comparison of UQ4CT with varying number of experts on OBQA dataset. Top-2 expert selection strategy grants best accuracy and calibration.}
\label{table:num_exp}
\vspace{0.1cm}
\begin{tabular}{l|l|l}
Top-K & ACC $\uparrow$         & ECE $\downarrow$      \\ \hline
Top-1 & $86.8_{0.59}$          & $7.54_{1.89}$         \\ \hline
Top-2 & $\mathbf{88.4_{0.35}}$ & $\mathbf{3.34_{1.60}}$ \\ \hline
Top-3 & $87.0_{0.77}$          & $5.68_{0.78}$          \\ \hline
Top-4 & $87.4_{0.51}$          & $7.52_{0.44}$         \\ \hline
Top-5 & $87.1_{0.48}$          & $6.16_{0.58}$         \\ \hline
\end{tabular}
\end{table}

As shown in Table \ref{table:num_exp}, $2$ active experts give the optimal performance in terms of accuracy and ECE scores. One expert alone cannot capture complicated functional relationships, while more than $2$ experts could potentially introduce redundant functional bases to the model, which deviates the output distribution from the data distribution, thus worsening predictive and calibration performance. Additionally, more active experts lead to a more flattened distribution across experts, which hardens the alignment of parameter mixtures during fine-tuning.

\subsection{Incremental Weighting on Calibration Term.} \label{app:incremental_weighting}
Due to the random initialization of LoRA experts, the predictions during early fine-tuning stage are likely to be incorrect as the model has little knowledge on the functional relationships regarding the data. Thus, it is intuitive to incrementally increase the weight parameter $\beta$ over the calibration term $\mathcal{L}_{\mathrm{cal}}$ in the training loss for the LoRA experts to learn before calibration. We conduct this study by incrementally increase $\beta$ from $0$ to $1$ within $50$ gradient steps during the early stage of fine-tuning:
\begin{equation}
    \beta = \min\left\{1, \frac{\mathrm{current\_grad\_step}}{50}\right\}.
\end{equation}
We choose $50$ gradient steps from our observation that training loss generally stabilizes after $50$ gradient steps, indicating the LoRA experts have learned some functional relationships from data.

As shown in Table \ref{table:ablation}, the incremental loss has significantly worse ECE performance across all tasks. This demonstrates the advantage of uncertainty calibration even in the early stage. In the beginning, the lack of functional relationships on the training data in LoRA experts lead to high uncertainty. Thus, UQ4CT encourages exploration over all LoRA experts while UQ4CT\_Incremental lacks it due to the small weighting in the beginning. 

\begin{table*}[t]
\small
\centering
\caption{Performance comparison of UQ4CT with and without incremental weighting. Incremental weighting has worse ECE performance while maintains similar accuracy.}
\label{table:ablation}
\vspace{0.1cm}
\begin{tabular}{l|| l |l l l l }
Metrics & Methods & ARC-E & ARC-C & OBQA & ClimateQA \\
\hline \hline
\multirow{1}{*}{ACC $\uparrow$} \rule{0pt}{2.25ex}
& UQ4CT & $88.66_{0.20}$ & $79.60_{1.21}$ & $88.40_{0.35}$ & $79.97_{0.85}$ \\
& UQ4CT\_Incremental & $87.15_{0.95}$ & $80.84_{1.03}$ & $88.53_{0.48}$ & $75.87_{2.89}$ \\
\hline\hline
\multirow{1}{*}{ECE $\downarrow$} \rule{0pt}{2.25ex}
& UQ4CT & $\mathbf{3.97_{0.78}}$ & $\mathbf{4.43_{0.82}}$ & $\mathbf{3.34_{1.60}}$ & $\mathbf{9.36_{2.77}}$ \\
& UQ4CT\_Incremental & $6.55_{1.42}$ & $10.02_{1.95}$ & $6.87_{1.68}$ & $14.16_{0.91}$ \\
\end{tabular}
\end{table*}




\subsection{Training Details} \label{app:implement_detail}
We train our model with total of $8$ LoRA experts, and select $2$ experts with the highest probability. For each expert, we use $rank=16$ and $alpha=32$. We use batch size of $16$ to train our model. For climate task, we set the learning rate to $5e-4$ and dropout rate to $0.1$ to incorporate the small dataset size. For other tasks, we use $2e-4$ as our learning rate with dropout $0.05$. We use AdamW as our optimizer and a cutoff length of $512$ for prompts during training. Our model is trained on A100 GPU, with 20GB GPU memory consumption per task. Training time is from $25$ to $50$ minutes depending on the task.

The experimental setup for single LoRA based models is similar with LoRA ranks set to $80$ to accommodate the MoE model size. For the ensemble baseline, we use an ensemble size of $8$ with $rank=16$. For Laplace-LoRA, we follow the Laplace hyperparameters in \href{https://github.com/MaximeRobeyns/bayesian_lora}{this Github Repository}.
\subsection{Expected Calibration Error} \label{app:ece}
Expected calibration error (ECE) is a commonly used metric to asses uncertainty quantification performance. ECE measures the alignment between prediction accuracy and model confidence through regrouping the predicted probabilities into $m$ bins. This method then computes the weighted average of the difference between average accuracy and confidence in each bin:
\begin{align}
    \text{ECE} = \sum_{m=1}^M \frac{|B_m|}{N} |\text{acc}(B_m) - \text{conf}(B_m)|,
\end{align}
where $|B_m|$ is the number of evaluated datapoints in bin $m$, acc and conf is calculated as following:
\begin{align}
    \text{acc}(B_m) = \frac{1}{|B_m|} \sum_{i\in B_m} \mathbf{1}(\hat{y}_i = y_i),
\end{align}
\begin{align}
    \text{conf}(B_m) = \frac{1}{|B_m|} \sum_{i\in B_m} P(\hat{y}_i).
\end{align}
In this paper, we use an ECE bin size of $15$, following the experiment setup in Laplace-LoRA \citep{yang2024bayesian}.

\subsection{MMLU Distribution Shift Dataset Composition} \label{app:OOD}
\begin{itemize}
    \item \textbf{Computer Science (CS)}:
    \begin{itemize}
        \item College Computer Science
        \item Computer Security
        \item High School Computer Science
        \item Machine Learning
    \end{itemize}
    \item \textbf{Engineering (Eng)}:
    \begin{itemize}
        \item Electrical Engineering
    \end{itemize}
    \item \textbf{Law}:
    \begin{itemize}
        \item International Law
        \item Jurisprudence
        \item Professional Law
    \end{itemize}
    \item \textbf{Health}:
    \begin{itemize}
        \item Anatomy
        \item Clinical Knowledge
        \item College Medicine
        \item Human Aging
        \item Nutrition
        \item Professional Medicine
        \item Virology
    \end{itemize}
\end{itemize}

\subsection{Prompt Perturbation Comparison} \label{app:ppc}
Here, we compare our method with SPUQ \citep{gao2024spuq}, which perturbs the prompt, aggregates predictions and confidences to measure uncertainty. We test SPUQ and UQ4CT with LLama3.1-8b as the base model. As shown in Table \ref{table:spuq_comparison}, SPUQ’s large ECE values suggest that simply aggregating predictions from perturbed prompts does not adequately calibrate model confidence, highlighting the limitations of prompt perturbation as an uncertainty quantification strategy for LLMs, especially for smaller models.

\begin{table*}[ht] \small \centering \caption{Performance comparison of UQ4CT and SPUQ across four tasks. UQ4CT achieves higher accuracy and substantially lower ECE than SPUQ.} \label{table:spuq_comparison} \vspace{0.1cm} \begin{tabular}{l|| l |l l l l } Metrics & Methods & ARC-E & ARC-C & OBQA & ClimateQA \\ \hline \hline \multirow{2}{*}{ACC $\uparrow$} \rule{0pt}{2.25ex} & SPUQ & $86.67$ & $73.94$ & $74.00$ & $67.80$ \\ & UQ4CT & $88.66_{0.20}$ & $79.60_{1.21}$ & $88.40_{0.35}$ & $79.97_{0.85}$ \\ \hline\hline \multirow{2}{*}{ECE $\downarrow$} \rule{0pt}{2.25ex} & SPUQ & $10.81$ & $7.40$ & $8.86$ & $14.68$ \\ & UQ4CT & $\mathbf{3.97_{0.78}}$ & $\mathbf{4.43_{0.82}}$ & $\mathbf{3.34_{1.60}}$ & $\mathbf{9.36_{2.77}}$ \\ \end{tabular} \end{table*}

\subsection{Peak Memory Analysis} \label{app:memory_analysis}
We provide a peak memory analysis using the same setup (Llama3.1-8B, OBQA, single A100, bf16 training). Following the BLoB paper's asymmetric design (Sec~3.1 of \citet{wang2024blob}), only $\mathbf{A}$ is Bayesianized with variance parameterized by $\mathbf{G} \in \mathbb{R}^{r \times n_\text{in}}$, and training uses $K=1$ sample with Flipout.

All three methods adapt 7 matrices per layer (Q, K, V, O, gate, up, down) across 32 layers. For Llama3.1-8B:
\begin{itemize}
    \item Q, K, V, O: $n_\text{in} = n_\text{out} = 4096$
    \item Gate, Up: $n_\text{in} = 4096$, $n_\text{out} = 14336$
    \item Down: $n_\text{in} = 14336$, $n_\text{out} = 4096$
\end{itemize}
Per layer: $\sum n_\text{in} = 4 \times 4096 + 2 \times 4096 + 14336 = 38912$, $\sum n_\text{out} = 4 \times 4096 + 2 \times 14336 + 4096 = 43008$, $\sum(n_\text{in} + n_\text{out}) = 81920$.

\paragraph{Trainable parameters (bf16, 2 bytes/param).}
\begin{table*}[h]
\small
\centering
\caption{Trainable parameter counts for LoRA, UQ4CT, and BLoB (bf16, 2 bytes/param).}
\label{table:trainable_params}
\vspace{0.1cm}
\resizebox{\linewidth}{!}{
\begin{tabular}{l l r r}
\toprule
\textbf{Method} & \textbf{Component} & \textbf{Params} & \textbf{bf16} \\
\midrule
\multirow{2}{*}{LoRA ($r{=}80$)}
& $\mathbf{B} \in \mathbb{R}^{n_\text{out} \times r}$: $80 \times 43008 \times 32 = 110.10$M & & \\
& $\mathbf{A} \in \mathbb{R}^{r \times n_\text{in}}$: $80 \times 38912 \times 32 = 99.61$M & 209.72M & 0.41 GB \\
\midrule
\multirow{2}{*}{UQ4CT ($K{=}8$, $r{=}16$)}
& 8 experts, each $\mathbf{B}_k, \mathbf{A}_k$: $8 \times 16 \times 81920 \times 32 = 335.54$M & & \\
& Routers $\mathbf{W}_r \in \mathbb{R}^{K \times d}$: $7 \times 8 \times 4096 \times 32 = 7.34$M & 342.88M & 0.64 GB \\
\midrule
\multirow{3}{*}{BLoB ($r{=}80$)}
& $\mathbf{B} \in \mathbb{R}^{n_\text{out} \times r}$ (deterministic): $80 \times 43008 \times 32 = 110.10$M & & \\
& $\mathbf{M} \in \mathbb{R}^{r \times n_\text{in}}$ (mean of $\mathbf{A}$): $80 \times 38912 \times 32 = 99.61$M & & \\
& $\mathbf{G} \in \mathbb{R}^{r \times n_\text{in}}$ (std param of $\mathbf{A}$, $\Omega = \mathbf{G}^2$): $80 \times 38912 \times 32 = 99.61$M & 309.33M & 0.58 GB \\
\bottomrule
\end{tabular}
}
\end{table*}

Note: $\sum n_\text{out} > \sum n_\text{in}$ because the FFN gate/up layers project from 4096 to 14336 (large $n_\text{out}$), while the down layer reverses this. The asymmetry does not cancel, so $\mathbf{B}$ (which depends on $n_\text{out}$) has more parameters than $\mathbf{M}$ or $\mathbf{G}$ (which depend on $n_\text{in}$).

\paragraph{Peak GPU memory breakdown.}
\begin{table*}[h]
\small
\centering
\caption{Peak GPU memory breakdown for LoRA, UQ4CT, and BLoB on Llama3.1-8B (OBQA, single A100, bf16).}
\label{table:peak_memory}
\vspace{0.1cm}
\begin{tabular}{l r r r}
\toprule
\textbf{Component} & \textbf{LoRA} & \textbf{UQ4CT} & \textbf{BLoB} \\
\midrule
Base model (frozen, bf16) & 16.00 GB & 16.00 GB & 16.00 GB \\
Adapter params (bf16, 2B/param) & 0.41 GB & 0.64 GB & 0.58 GB \\
Optimizer states (fp32, 8B/param) & 1.56 GB & 2.57 GB & 2.31 GB \\
Gradients (bf16, 2B/param) & 0.39 GB & 0.64 GB & 0.58 GB \\
Sampling buffers ($\mathbf{E}$, sampled $\mathbf{A}$, each $r \times n_\text{in}$) & --- & --- & 0.38 GB \\
Activations/KV cache (shared) & ${\sim}$1.50 GB & ${\sim}$1.50 GB & ${\sim}$1.50 GB \\
\midrule
\textbf{Estimated peak} & $\mathbf{{\sim}19.86}$ \textbf{GB} & $\mathbf{{\sim}21.35}$ \textbf{GB} & $\mathbf{{\sim}21.35}$ \textbf{GB} \\
\bottomrule
\end{tabular}
\end{table*}

As the analysis above shows, UQ4CT and BLoB have nearly identical peak memory (${\sim}$21.35~GB), both ${\sim}$1.49~GB (${\sim}$7.50\%) above standard LoRA, which is minimal and manageable. UQ4CT's overhead comes from additional expert parameters, while BLoB's comes from variance parameters ($\mathbf{G}$) plus sampling buffers.

\subsection{Open-Ended Generative Setup} \label{app:generative_setup}
To extend UQ4CT beyond multiple-choice settings, we adapt the calibration loss (Eq.~\ref{eq:calibration}) for open-ended generation by replacing the exact-match indicator with an F1-score-based correctness proxy:
\begin{equation} \label{eq:calibration_gen}
    \mathcal{L}_{\mathrm{cal}} = \left(\mathbbm{1}\{F_1(\mathrm{MixLoRA}(x), y^*) > \tau\} - \mathrm{FLC}(x) \right)^2,
\end{equation}
where $\tau = 0.3$. If the token-level F1 between the generated answer and the ground truth exceeds $\tau$, we count the prediction as correct. The threshold $\tau = 0.3$ is motivated by \citet{alur2024human}, who show that this level aligns with human judgment boundaries for partial-credit correctness in open-ended QA.

We evaluate on TruthfulQA \citep{lin2021truthfulqa} and TriviaQA \citep{joshi2017triviaqa} using Llama-3.1-8B-Instruct. To ensure a controlled comparison, we first fine-tune the base model with MixLoRA (rank=16, 8 experts), then apply all UQ baseline methods to this shared fine-tuned backbone. The baselines include \textbf{Sequence Probability (SeqProb)} and \textbf{Verbalized Confidence} \citep{xiong2024can}, \textbf{Verbalized Consistency} \citep{xiong2024can}, \textbf{Semantic Entropy} \citep{farquhar2024detecting}, and \textbf{Semantic Density} \citep{qiu2024semantic}. Sampling-based methods (marked with *) use $N=5$ samples per question. All baselines use average generated token probability as confidence.

\end{document}